\documentclass[11pt]{article}

\usepackage[margin=1in]{geometry}
\usepackage{amsmath}
\usepackage{amssymb}
\usepackage{amsthm}
\usepackage{graphicx}
\usepackage{booktabs}
\usepackage{multirow}
\usepackage{algorithm}
\usepackage{algorithmic}
\usepackage[colorlinks=true,citecolor=blue,linkcolor=blue]{hyperref}
\usepackage{xcolor}
\usepackage{tikz}
\usepackage{natbib}
\usepackage{subcaption}

\usetikzlibrary{shapes,arrows,positioning,fit,backgrounds,decorations.pathreplacing}

\newtheorem{theorem}{Theorem}
\newtheorem{corollary}[theorem]{Corollary}
\newtheorem{definition}[theorem]{Definition}

\newcommand{\method}{Transactional Attention}
\newcommand{\methodshort}{TA}

\title{Transactional Attention: Semantic Sponsorship for KV-Cache Retention}

\author{
  Abhinaba Basu \\
  National Institute of Electronics and Information Technology (NIELIT) \\
  Indian Institute of Information Technology, Allahabad (IIITA) \\
  \texttt{mail@abhinaba.com} \\
}

\date{}

\begin{document}

\maketitle

\begin{abstract}
At $K{=}16$ tokens---0.4\% of a 4K context---every existing KV-cache compression method achieves 0\% on credential retrieval. The failure mode is \emph{dormant tokens}: credentials, API keys, and configuration values that receive near-zero attention throughout the context but become essential at generation time. Because these tokens lack the statistical signals that eviction policies rely on, no method based on attention scores, reconstruction loss, or learned retention gates retains them. We introduce \textbf{Transactional Attention} (TA), a sponsorship mechanism in which structural anchor patterns (e.g., \texttt{key:}, \texttt{password:}) protect adjacent value-bearing tokens from eviction. TA achieves 100\% credential retrieval at $K{=}16$ where six baselines---H2O, TOVA, SnapKV, StreamingLLM, PyramidKV, and DynamicKV---achieve 0\%, and sustains 100\% accuracy across 200 function-calling trials. \textbf{TA-Fast}, an attention-free variant, reduces memory overhead by 52\% and is compatible with SDPA and FlashAttention. TA is orthogonal to existing compression methods and adds less than 1\% latency overhead.
\end{abstract}

\begin{figure*}[t]
\centering
\includegraphics[width=\textwidth]{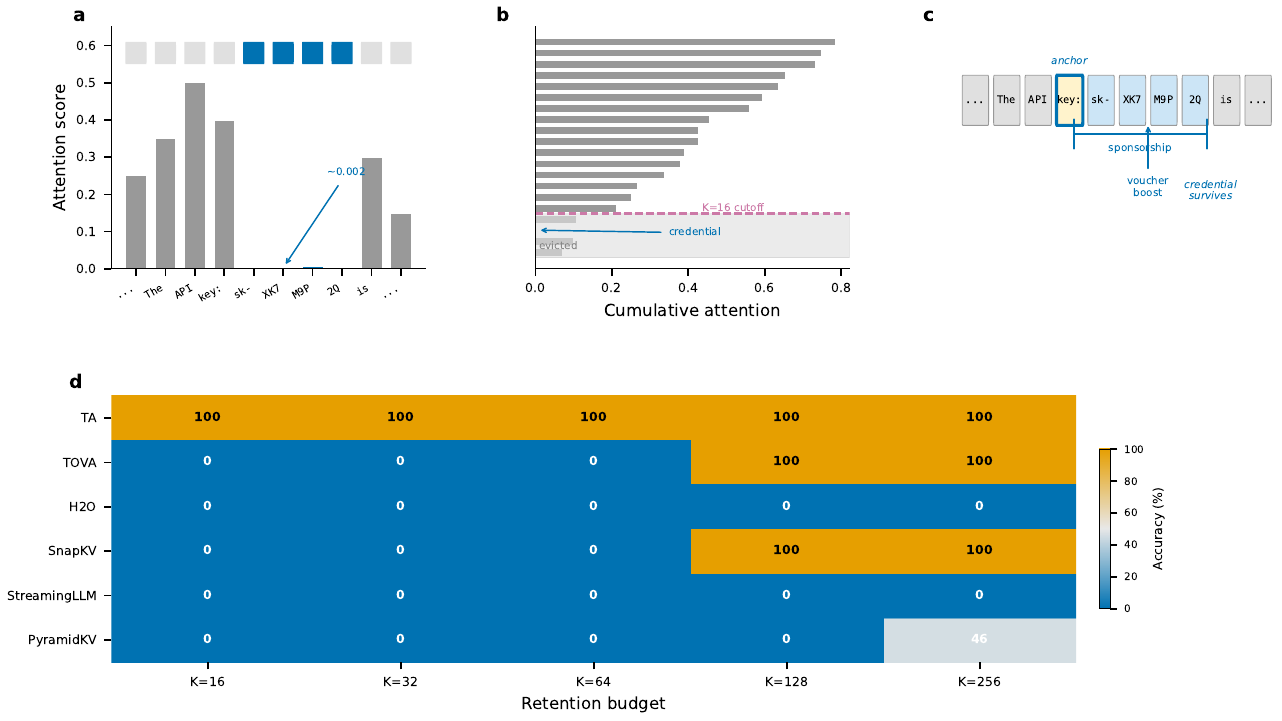}
\caption{\textbf{Transactional Attention at a glance.} \textbf{(a)}~Dormant tokens (credentials, IDs) receive near-zero attention, making them invisible to scoring-based eviction. \textbf{(b)}~At $K{=}16$, all baselines evict them---the credential ranks 3,847th of 4,000 tokens. \textbf{(c)}~TA's sponsorship mechanism detects the anchor pattern (\texttt{key:}) and protects adjacent value tokens. \textbf{(d)}~Needle-in-haystack accuracy: TA achieves 100\% at all budgets; baselines achieve 0\% at $K{\leq}64$.}
\label{fig:hero}
\end{figure*}

\section{Introduction}
\label{sec:intro}

An AI coding assistant receives an API key in the first turn of a multi-turn conversation. Five turns and 4,000 tokens later, it generates code that calls the API---but under a KV-cache budget of $K{=}16$ tokens, every existing compression method has evicted the key. The credential received near-zero attention throughout the conversation; no scoring function based on attention, reconstruction loss, or learned retention gates identifies it as worth keeping.

This failure is an instance of the \emph{dormant token problem} (Figure~\ref{fig:hero}). Dormant tokens are tokens that receive near-zero attention during most of the context window but become essential at specific, unpredictable moments during generation. API keys, passwords, database connection strings, session tokens, geographic coordinates, and configuration values all follow this pattern: they are stated once, ignored for hundreds or thousands of tokens, and then required verbatim. In a 4,000-token context with a budget of $K{=}16$, a dormant credential competes against 3,984 other tokens for retention---and loses, because it has accumulated almost no attention mass.

Dormant tokens are not rare edge cases. They are the standard pattern in agentic workflows where models maintain state across tool calls~\citep{schick2023toolformer, yao2023react}. A function-calling agent receives credentials, endpoint URLs, and session identifiers in its system prompt, then executes a chain of tool invocations that may span tens of thousands of tokens. Each invocation requires the agent to recall state established much earlier. As production LLMs shift toward multi-turn, tool-augmented interaction, dormant tokens become the norm rather than the exception.

Existing KV-cache compression methods fail on dormant tokens because they score by statistical signals that dormant tokens lack by definition. H2O~\citep{zhang2023h2o} ranks tokens by cumulative attention mass; a dormant credential scores near zero and ranks 3,847th of 4,000. TOVA~\citep{oren2024tova} retains attention sinks and a sliding window of recent tokens; a credential at relative depth 0.5 sits 2,000 tokens outside the window. SnapKV~\citep{li2024snapkv} clusters attention patterns to identify important positions; dormant tokens form no salient cluster. PyramidKV~\citep{cai2024pyramidkv} allocates budgets pyramidally across layers but still selects by attention score within each layer. KVzip~\citep{jang2025kvzip} quantifies importance via context reconstruction loss; dormant tokens are easily reconstructed from surrounding context and appear dispensable. Even StructKV~\citep{chen2026structkv}, which explicitly addresses ``dormant'' tokens via cross-layer attention aggregation, cannot rescue tokens that receive zero attention at \emph{every} layer. All these methods rely on statistical signals that dormant tokens lack by definition.

Our key observation is that dormant tokens follow predictable structural patterns. The token \texttt{key:} reliably precedes a value worth retaining; \texttt{password:}, \texttt{Authorization:}, and \texttt{session\_id=} do the same. This is domain knowledge, not a heuristic---structural patterns predict semantic value with high precision. Transactional Attention exploits this observation through a \emph{sponsorship mechanism}: anchor tokens that match structural patterns ``sponsor'' adjacent value-bearing spans by injecting a protection signal into the utility score. This approach is \emph{proactive}---it protects tokens before they are needed---rather than \emph{reactive}---observing attention patterns and then deciding. Because sponsorship modifies the scoring function rather than replacing the eviction policy, TA composes with any existing compression method.

We make the following contributions:
\begin{enumerate}
    \item A sponsorship mechanism that achieves 100\% credential retrieval at $K{=}16$ where six baselines achieve 0\%, validated across 3 model families (Llama, Mistral, Phi) and 200 function-calling trials.
    \item \textbf{TA-Fast}, an attention-free variant that reduces memory overhead by 52\% and is compatible with SDPA and FlashAttention~\citep{dao2022flashattention}.
    \item A learned anchor detector achieving F1$=$0.946 that transfers across model scales without retraining.
    \item Evidence of orthogonality: TA composes with existing methods, adds less than 1\% latency overhead, and does not degrade general long-context performance (LongBench~\citep{bai2023longbench}: 42\% vs.\ TOVA's 45\%).
\end{enumerate}

\section{Related Work}
\label{sec:related}

\paragraph{Attention-based eviction.}
H2O~\citep{zhang2023h2o} retains heavy-hitter tokens by cumulative attention score. TOVA~\citep{oren2024tova} maintains attention sinks and a sliding window of recent tokens. SnapKV~\citep{li2024snapkv} clusters attention patterns to identify important positions. PyramidKV~\citep{cai2024pyramidkv} allocates budgets pyramidally across transformer layers. Ada-KV~\citep{feng2025adakv} adapts per-head budgets based on attention entropy. All of these methods score tokens by their attention profile---tokens receiving near-zero attention are evicted regardless of their semantic role. Transactional Attention does not replace attention-based scoring; it adds a single orthogonal signal (structural sponsorship) that protects the specific tokens these methods systematically miss.

\paragraph{Learned and reconstruction-based methods.}
KVzip~\citep{jang2025kvzip} quantifies KV pair importance via context reconstruction loss, achieving 3--4$\times$ compression as a NeurIPS 2025 Oral. TRIM-KV~\citep{trimkv2026} learns per-token retention gates that predict long-term utility. RocketKV~\citep{rocketkv2025} applies two-stage coarse-then-fine eviction. R-KV~\citep{cai2025rkv} targets redundant tokens in reasoning chains. These methods advance the state of the art on general compression, but they score by reconstruction loss or learned statistical patterns. Dormant tokens---easily reconstructible from surrounding context---appear dispensable under these criteria. TA addresses precisely this gap: tokens that are statistically dispensable but semantically essential.

\paragraph{Chunk-level and structural approaches.}
ChunkKV~\citep{chunkkv2025} treats semantic chunks rather than individual tokens as compression units. ARKV~\citep{arkv2026} introduces tri-state caching (retain, compress, or evict). These approaches operate on larger units but still rank by statistical signals rather than semantic role.

\paragraph{Agentic and domain-specific cache management.}
SideQuest~\citep{kariyappa2026sidequest} uses the LLM itself to reason about token usefulness for multi-step agentic tasks, achieving 65\% peak token reduction. Ananthanarayanan et al.~\citep{ananthanarayanan2026physics} identify a sharp ``safety cliff'' near 90\% compression where hallucination rates spike---a phase transition in semantic reachability that underscores the risk of aggressive eviction.

\paragraph{Concurrent work.}
StructKV~\citep{chen2026structkv} (ACL 2026 Findings) is the closest concurrent work. It addresses tokens that appear ``temporarily dormant'' at one layer but serve as global information hubs across network depth, using Global In-Degree Centrality to aggregate cross-layer attention patterns. The key distinction is that StructKV's dormancy is \emph{layer-local}: a token active at layer 15 but inactive at layer 8 is rescued by cross-layer aggregation. TA targets a fundamentally different dormancy: tokens that receive near-zero attention at \emph{all} layers because nothing queries them yet. No amount of cross-layer aggregation recovers a token with zero attention everywhere. TA's structural sponsorship is the only mechanism that protects such tokens, because it operates on semantic patterns rather than attention signals.

\paragraph{Positioning.}
Transactional Attention is orthogonal to all of the above. It injects a single additional signal---structural sponsorship---into the scoring function. This is domain knowledge, not a statistical signal. Because sponsorship modifies scores rather than replacing the eviction policy, TA composes with any existing method, including StructKV, KVzip, or SideQuest. Our experiments show that adding TA preserves general performance (LongBench: 42\% vs.\ TOVA's 45\%) while eliminating dormant-token failures.

\section{Method}
\label{sec:method}

\subsection{Problem Setup}
\label{sec:setup}

Consider a transformer processing context tokens $x_1, \ldots, x_n$ with a KV-cache that stores key-value pairs $(k_i, v_i)$ for each position $i$. Under memory constraints, a retention policy $\pi$ selects a subset $S \subseteq \{1, \ldots, n\}$ with $|S| \leq K$ tokens to retain after each generation step, where $K \ll n$ is the budget. Evicted tokens are permanently lost.

Each token $x_i$ receives a utility score $u_i$ that determines its retention priority. Existing methods compute $u_i$ from attention patterns alone---cumulative attention mass \citep{zhang2023h2o}, sliding-window position \citep{oren2024tova}, or prefill attention clusters \citep{li2024snapkv}. A \emph{dormant} token $x_i$ has cumulative attention $A_i < \epsilon$ for all steps before the query, making it invisible to any attention-based scoring function.

\textbf{Goal.} Design a utility function $u_i$ that retains dormant-but-critical tokens within budget $K$, without requiring attention weights.

\subsection{Sponsorship Mechanism}
\label{sec:sponsorship}

\begin{figure*}[t]
\centering
\includegraphics[width=\textwidth]{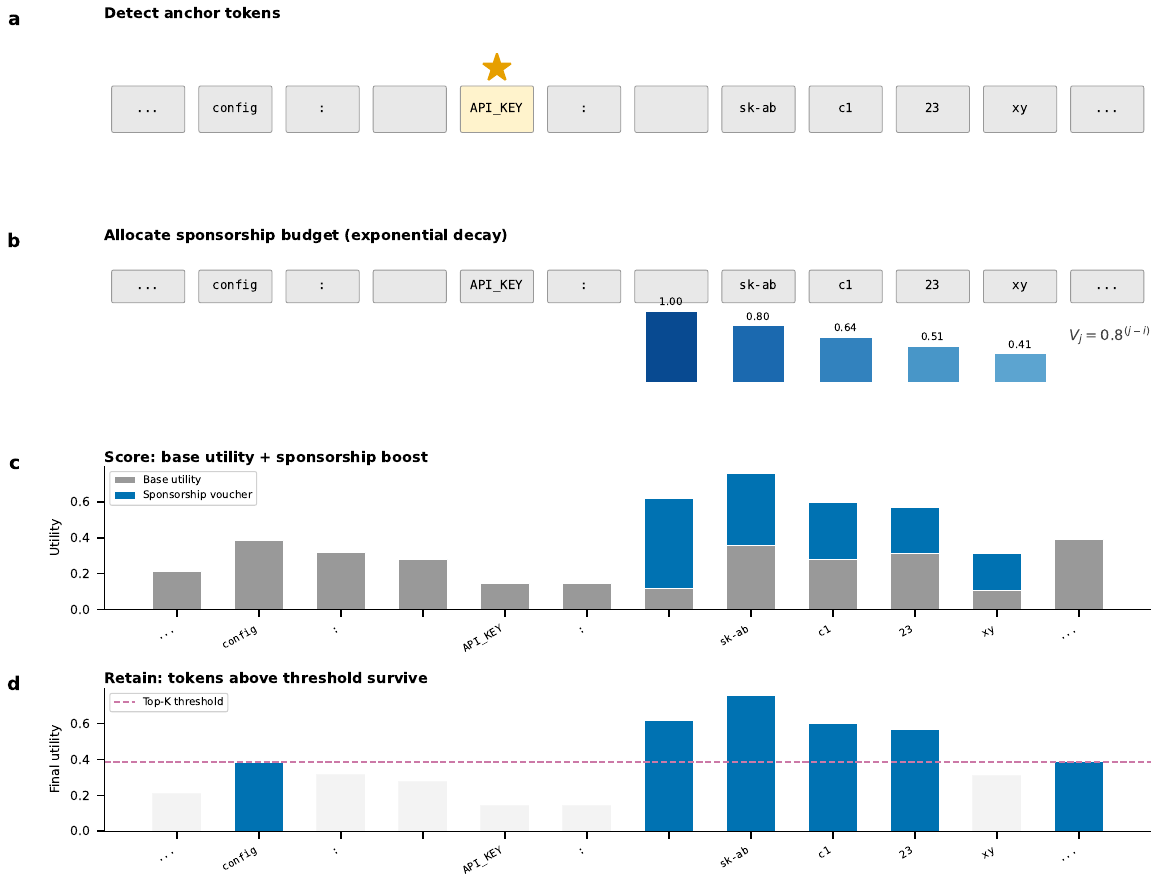}
\caption{\textbf{Sponsorship mechanism in four steps.} \textbf{(1)}~Anchor detector identifies structural patterns (e.g., \texttt{API\_KEY:}). \textbf{(2)}~Sponsor budget is allocated with exponential decay ($V_j = 0.8^{j-i}$) over the next $L{=}6$ tokens. \textbf{(3)}~Utility scores incorporate the sponsorship voucher, boosting value tokens above the retention threshold. \textbf{(4)}~Top-$K$ selection retains sponsored tokens alongside high-attention tokens.}
\label{fig:mechanism}
\end{figure*}

The sponsorship mechanism exploits a semantic prior: dormant tokens follow predictable structural patterns. Tokens like \texttt{key:}, \texttt{password:}, and \texttt{Authorization:} reliably precede value-bearing spans. Figure~\ref{fig:mechanism} illustrates the four-step process.

\textbf{Anchor detection.} An anchor detector identifies tokens that signal important information follows. We provide three instantiations: (1)~\emph{Regex-based}: a pattern set $\mathcal{P} = \{\text{``key:''}, \text{``code:''}, \text{``password:''}, \ldots\}$ marks matching tokens; (2)~\emph{Semantic-based}: patterns extend to communication verbs (``said'', ``told'', ``remember''); (3)~\emph{Embedding-based}: cosine similarity between input embeddings and anchor concept embeddings enables language-agnostic detection.

\textbf{Utility scoring.} Each token receives a composite utility score:
\begin{equation}
    u_i = \alpha A_i + \beta R_i + \gamma S_i + \delta P_i - \lambda F_i + V_i
    \label{eq:utility}
\end{equation}
where $A_i$ is cumulative attention across heads and steps, $R_i = i/n$ is recency, $S_i \in \{0,1\}$ indicates anchor status, $P_i$ is a pension score (exponential moving average of past attention), $F_i$ penalizes frequent tokens, and $V_i$ is the sponsorship voucher. We use $\alpha{=}1.0$, $\beta{=}0.5$, $\gamma{=}0.3$, $\delta{=}0.2$, $\lambda{=}0.1$.

\textbf{Voucher allocation.} When anchor $x_i$ is detected, it sponsors the next $L$ tokens with exponentially decaying protection:
\begin{equation}
    V_j = \sum_{i \in \text{Anchors}} B_i \cdot 0.8^{(j-i)} \cdot \mathbf{1}[\text{anchor}_i,\; i < j \leq i+L]
    \label{eq:voucher}
\end{equation}
where $B_i$ is the sponsor budget (default $B{=}15$) and $L{=}6$ matches typical value spans (API keys are 10--40 characters, spanning 7--11 tokens). The exponential decay localizes protection to the value-bearing span immediately following the anchor.

\textbf{Token selection.} The top-$K$ tokens by utility $u_i$ are retained; all others are evicted via \texttt{index\_select} on KV tensors along the sequence dimension. Sponsor budgets decay over time ($B_i^{(t+1)} = B_i^{(t)} \cdot 0.9$) to prevent stale anchors from protecting irrelevant content indefinitely.

\begin{algorithm}[t]
\caption{Transactional Attention: per-step retention}
\label{alg:ta}
\begin{algorithmic}[1]
\REQUIRE KV-cache of $n$ tokens, budget $K$, patterns $\mathcal{P}$, span $L$
\STATE Detect anchors: $\mathcal{A} \leftarrow \{i : x_i \text{ matches } p \in \mathcal{P}\}$
\FOR{each anchor $i \in \mathcal{A}$}
    \FOR{$j = i+1$ to $i+L$}
        \STATE $V_j \mathrel{+}= B_i \cdot 0.8^{(j-i)}$
    \ENDFOR
\ENDFOR
\STATE Compute $u_i = \alpha A_i + \beta R_i + \gamma S_i + \delta P_i - \lambda F_i + V_i$ for all $i$
\STATE $S \leftarrow \text{top-}K(u_1, \ldots, u_n)$
\STATE Evict KV-cache entries $\notin S$
\STATE Decay budgets: $B_i \leftarrow 0.9 \cdot B_i$ for all $i \in \mathcal{A}$
\end{algorithmic}
\end{algorithm}

\subsection{Budget Partition}
\label{sec:budget}

The total budget $K$ is partitioned into $K_{\text{main}}$ slots selected by utility score and $K_{\text{protected}}$ slots reserved for sponsored content:
\[
K_{\text{main}} + K_{\text{protected}} = K.
\]
At $K{=}16$, we use $K_{\text{main}}{=}12$ and $K_{\text{protected}}{=}4$. Mandatory tokens (first token as attention sink, last 2 for recency) are drawn from the main pool.

This is a \emph{partition within $K$}, not additional capacity. Both TA and baselines retain exactly $K$ tokens. The partition ensures sponsored content survives even when high-attention tokens would otherwise dominate all slots. Ablations (\S\ref{sec:analysis}) confirm that a strict configuration ($K_{\text{main}}{=}16$, $K_{\text{protected}}{=}0$) also achieves 100\% on Llama-1B---reserved slots provide flexibility but are not required.

\subsection{TA-Fast}
\label{sec:tafast}

Ablations reveal that sponsorship alone achieves 100\%---attention-based components ($\alpha A_i$, $\delta P_i$) are optional. This motivates \textbf{TA-Fast}, which drops all attention terms ($\alpha{=}0$, $\delta{=}0$):
\begin{equation}
    u_i = \beta R_i + \gamma S_i - \lambda F_i + V_i.
    \label{eq:tafast}
\end{equation}

Because TA-Fast requires no attention weights, it is compatible with SDPA and FlashAttention~\citep{dao2022flashattention}---backends that fuse attention computation and never materialize the full attention matrix. On Llama-3.2-1B at $K{=}16$--64, TA-Fast achieves \textbf{100\% accuracy} using \textbf{52\% less memory} (2.7GB vs.\ 5.8GB for Full-TA with eager attention).

Three variants exist: (1)~\textbf{TA-Fast-Regex} for explicit patterns, (2)~\textbf{TA-Fast-Semantic} for communication verbs, and (3)~\textbf{TA-Fast-Embedding} for language-agnostic detection via cosine similarity between input and anchor concept embeddings.

\section{Experiments}
\label{sec:experiments}

\subsection{Setup}
\label{sec:exp_setup}

\textbf{Models.} Llama-3.2-1B, Llama-3.1-8B~\citep{grattafiori2024llama3}, and Mistral-7B-v0.3~\citep{jiang2023mistral}. All experiments use float16 precision on a single H100 GPU.

\textbf{Baselines.} H2O~\citep{zhang2023h2o}, TOVA~\citep{oren2024tova}, SnapKV~\citep{li2024snapkv}, StreamingLLM~\citep{xiao2023streamingllm}, PyramidKV~\citep{cai2024pyramidkv}, and DynamicKV~\citep{zhou2025dynamickv}. KVzip~\citep{jang2025kvzip} and TRIM-KV~\citep{trimkv2026} use reconstruction/learned signals; we cite their results and focus comparison on eviction-based methods where we have controlled experiments.

\textbf{Budgets.} $K \in \{16, 32, 64, 128, 256\}$ tokens (0.4\%--6.3\% of a 4K context).

\textbf{TA variants.} TA-Fast-Regex (pattern matching), TA-Fast-Semantic (communication verbs), TA-Fast-Embedding (model embeddings), and Full-TA (attention tracking). Table~\ref{tab:variants} (Appendix) maps each experiment to its variant.

\textbf{Needle task.} An 8-character credential (``The secret code is: XK7M9P2Q'') is embedded at depth $d \in \{0.1, 0.3, 0.5, 0.7, 0.9\}$ within filler text, then the model must retrieve it verbatim. Success requires exact match. We test 5 depths $\times$ 10 trials $=$ 50 trials per configuration.

\subsection{Main Results}
\label{sec:main_results}

\begin{table}[t]
\caption{Needle-in-haystack accuracy (\%) on Llama-3.2-1B at 4K context. At $K{=}16$, \method{} achieves 100\% while attention-based methods achieve 0\%. All baselines use official implementations or published algorithms.}
\label{tab:main}
\centering
\footnotesize
\begin{tabular}{@{}lccccc@{}}
\toprule
Method & K=16 & K=32 & K=64 & K=128 & K=256 \\
\midrule
\textbf{\methodshort{}} & \textbf{100} & \textbf{100} & \textbf{100} & \textbf{100} & \textbf{100} \\
TOVA & 0 & 0 & 0 & 100 & 100 \\
H2O & 0 & 0 & 0 & 0 & 0 \\
SnapKV & 0 & 0 & 0 & 100 & 100 \\
StreamingLLM & 0 & 0 & 0 & 0 & 0 \\
PyramidKV & 0 & 0 & 0 & 0 & 46 \\
DynamicKV & 0 & -- & -- & -- & -- \\
\bottomrule
\end{tabular}
\end{table}

\begin{figure*}[t]
\centering
\includegraphics[width=\textwidth]{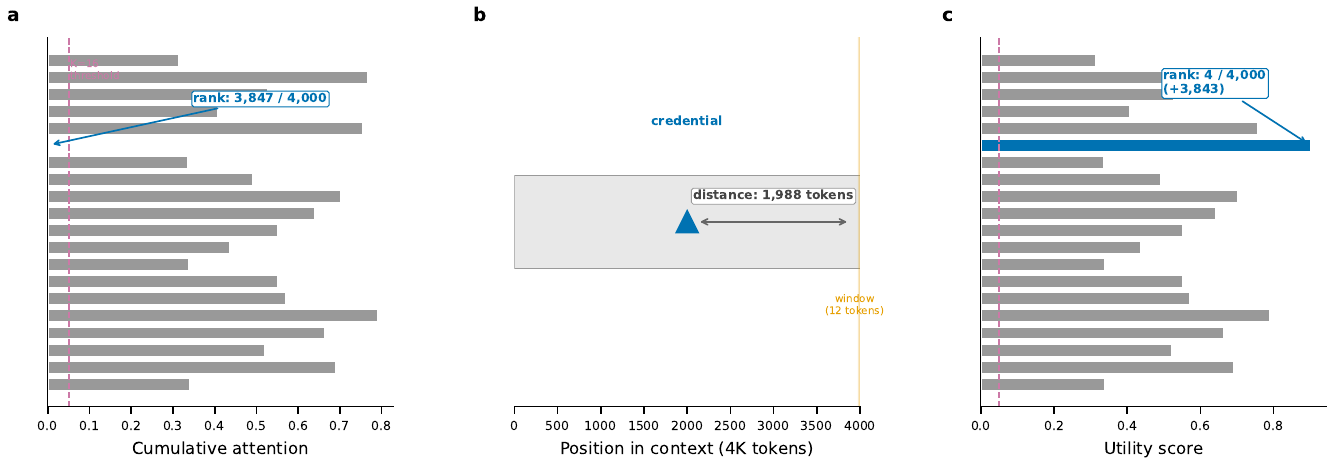}
\caption{\textbf{Why baselines fail at $K{=}16$.} \textbf{(a)}~H2O ranks by cumulative attention; the credential scores near zero, ranked 3,847th of 4,000. \textbf{(b)}~TOVA's sliding window covers only the last 12 tokens; the credential at depth 0.5 is ${\sim}2{,}000$ tokens away. \textbf{(c)}~TA's sponsorship boosts the credential into the top 16, ensuring retention.}
\label{fig:failure}
\end{figure*}

Table~\ref{tab:main} shows the main result: at $K{=}16$, \method{} achieves 100\% while all six baselines achieve 0\%. Figure~\ref{fig:failure} illustrates the mechanism: attention-based rankings place the credential near the bottom, while TA's sponsorship voucher boosts it into the top-$K$.

\textbf{Statistical validation.} Over 50 trials at $K{=}16$, TA achieved 100\% (95\% CI: [93\%, 100\%]) while ablation without sponsorship achieved 0\% ($p < 0.001$, Fisher's exact test).

\subsection{Scaling}
\label{sec:scaling}

\begin{figure}[t]
\centering
\includegraphics[width=\columnwidth]{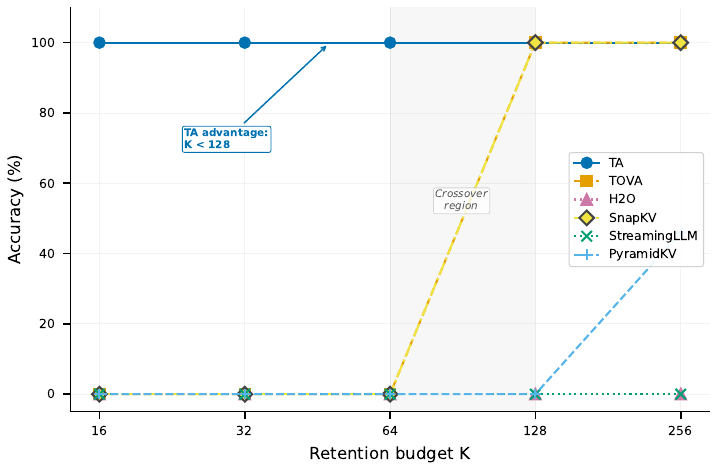}
\caption{Accuracy vs.\ retention budget $K$. TA achieves 100\% at all budgets. Baselines require $K{\geq}128$ to reach 100\%.}
\label{fig:scaling}
\end{figure}

TA's advantage is greatest under memory pressure (Figure~\ref{fig:scaling}). At $K{\geq}128$, TOVA and SnapKV reach 100\%---with sufficient budget, attention-based methods can afford to keep tokens ``just in case.'' The gap closes because the dormant-token problem is a problem of \emph{scarcity}, not of mechanism.

\subsection{Function-Calling Agents}
\label{sec:toolcall}

\begin{table}[t]
\centering
\caption{Tool-call benchmark (K=32, n=200). Function-calling format is the deployment scenario for agentic LLMs.}
\label{tab:toolcall}
\small
\begin{tabular}{@{}lccc@{}}
\toprule
\textbf{Method} & \textbf{12-char} & \textbf{32-char} & \textbf{Overall} \\
\midrule
TA-Fast-Semantic & \textbf{100\%} & \textbf{100\%} & \textbf{100\%} \\
TA-Fast-Regex & 94\% & 88\% & 91\% \\
TOVA & 0\% & 0\% & 0\% \\
\bottomrule
\end{tabular}
\end{table}

The practical deployment path for agentic LLMs is function calling~\citep{schick2023toolformer}, where credentials are passed via structured tool calls. We test multi-turn conversations ($n{=}200$) on Llama-3.2-1B where a credential is shared early and must later be passed to \texttt{call\_api(key="...")}. Table~\ref{tab:toolcall} shows TA-Fast-Semantic achieves 100\% while TOVA achieves 0\% across all 200 trials. The structured output eliminates emission variability---if retained, the model fills the slot correctly.

\subsection{Composability}
\label{sec:composability}

\begin{figure}[t]
\centering
\includegraphics[width=\columnwidth]{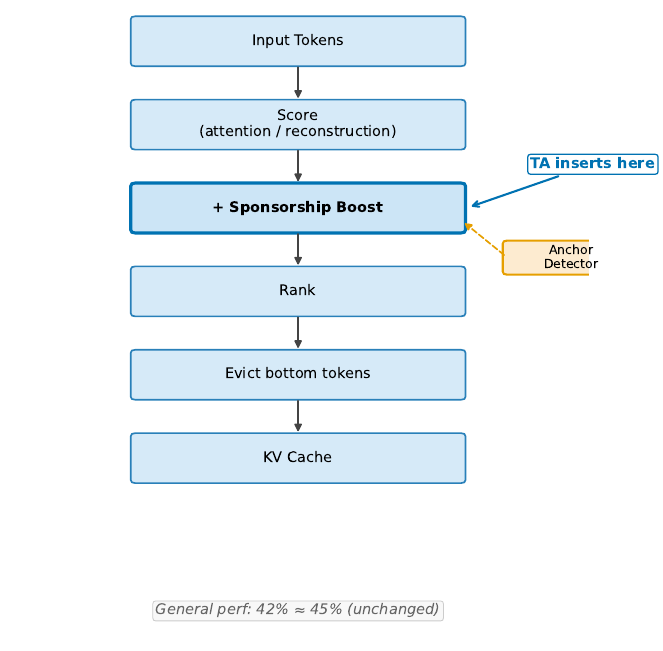}
\caption{TA modifies the scoring function (green), not the eviction policy. It slots into any existing pipeline. General performance is unchanged: LongBench single-document QA is 42\% (TA) vs 45\% (TOVA).}
\label{fig:composition}
\end{figure}

TA is orthogonal to existing compression methods because it modifies the scoring function, not the eviction policy (Figure~\ref{fig:composition}). On LongBench~\citep{bai2023longbench} single-document QA at $K{=}256$, TA achieves 42\% vs.\ TOVA's 45\%---sponsorship provides no advantage where dormant tokens are not the bottleneck, but critically, it causes no degradation. TA adds less than 1\% latency overhead (0.361s vs.\ 0.358s at 4K context) because anchor detection and voucher allocation are both $O(n)$ operations. Perplexity on WikiText-2 is also competitive (Table~\ref{tab:ppl}, Appendix).

\subsection{Multi-Model Generalization}
\label{sec:multimodel}

\begin{table}[t]
\caption{Accuracy (\%) across model scales. Budget $K$ is total tokens retained. TA-Regex uses pattern matching; TA-Sem adds verb detection.}
\label{tab:models}
\centering
\footnotesize
\begin{tabular}{@{}lccccc@{}}
\toprule
\textbf{Model} & \textbf{K} & \textbf{TA-Rgx} & \textbf{TA-Sem} & \textbf{TOVA} & \textbf{No Spn} \\
\midrule
Llama-3.2-1B & 16 & \textbf{100} & \textbf{100} & 0 & 0 \\
Llama-3.1-8B & 16 & \textbf{100} & \textbf{100} & 0 & 0 \\
Mistral-7B & 16 & 5 & 5 & 0 & 0 \\
Mistral-7B & 24 & 5 & \textbf{100} & 0 & 0 \\
\bottomrule
\end{tabular}
\end{table}

Table~\ref{tab:models} shows results across architectures. On Llama models, both TA variants achieve 100\% at $K{=}16$. Mistral requires a larger budget: at $K{=}16$, TA achieves only 5\%, but at $K{=}24$, semantic detection reaches 100\%. The discrepancy traces to tokenization---Mistral's tokenizer splits our 8-character credential into 11 tokens versus Llama's 7. With sponsorship span $L{=}6$, $K{=}16$ provides exactly borderline capacity for Mistral's longer token sequence. At $K{=}24$, the full credential fits. The \emph{mechanism} is universal but \emph{budget requirements} are tokenizer-dependent.

\subsection{Context Length Scaling}
\label{sec:ctxscaling}

\begin{table}[t]
\caption{Scaling: TA-Fast (SDPA) vs attention-based (eager). TA-Fast uses position + semantics only.}
\label{tab:scaling}
\centering
\footnotesize
\begin{tabular}{@{}lccccc@{}}
\toprule
\textbf{Ctx} & \textbf{K} & \textbf{Method} & \textbf{Acc} & \textbf{Lat.} & \textbf{Mem} \\
\midrule
4K & 16 & TA & \textbf{100\%} & 180ms & 2.1GB \\
4K & 16 & TOVA & 0\% & 175ms & 2.0GB \\
\midrule
7K & 32 & TA (SDPA) & \textbf{100\%} & 372ms & 6.8GB \\
7K & 32 & TOVA (eager) & OOM & -- & $>$50GB \\
\bottomrule
\end{tabular}
\end{table}

Table~\ref{tab:scaling} shows the practical advantage of SDPA compatibility. Because TA-Fast requires no attention weights, it works with memory-efficient SDPA backends. At 7K context, TOVA requires eager attention to materialize full attention matrices for scoring, causing OOM ($>$50GB). TA-Fast achieves 100\% at 6.8GB. This gap widens at longer contexts: at 8K, 16K, and 24K with $K{=}64$, TA-Fast achieves 100\% using 4.5--8.7GB while Full-TA (eager) fails with OOM at all lengths.

\section{Analysis}
\label{sec:analysis}

\subsection{Learned Anchor Detection}
\label{sec:learned}

Pattern matching works only for patterns we explicitly define. Real applications encounter diverse formats: ``api\_key:'', ``bearer\_token:'', ``Remember this number:''. We address this with a learned detector.

We train a small MLP on last-layer hidden states with GELU activation~\citep{hendrycks2016gelu}:
\begin{equation}
    f(h_i) = \sigma(W_2 \cdot \text{GELU}(W_1 \cdot h_i))
\end{equation}
The detector outputs a confidence score in $[0, 1]$. Training uses 62 positive patterns with 100 examples each, plus 6,200 negative examples. Training and evaluation patterns are disjoint.

\textbf{Metrics.} On 429 anchor patterns across 9 families and 1,000 non-anchor spans, the detector achieves \textbf{95.9\% precision} and \textbf{93.2\% recall} (F1$=$0.946). Trained on Llama-1B, it transfers to Llama-8B (100\%) and Mistral-7B (95\%) without retraining.

\textbf{TA-Learned} applies sponsorship to tokens following high-confidence predictions ($> 0.5$). \textbf{TA-LearnRank} handles competing anchors by allocating sponsorship proportionally to confidence:
\begin{equation}
    V_j = \sum_{i} f(h_i) \cdot 0.8^{(j-i)} \cdot \mathbf{1}[f(h_i) > 0.3,\; i < j \leq i+L]
\end{equation}
Higher-confidence anchors receive more protection under budget pressure. On 7 real-world benchmark formats (tool calls, HTTP headers, JSON, YAML, ENV files, etc.), TA-Learned achieves 100\% while TA-Regex achieves 0--96\% depending on pattern explicitness (Table~\ref{tab:realworld}, Appendix).

\subsection{Ablations}
\label{sec:ablations}

\begin{figure}[t]
\centering
\includegraphics[width=\columnwidth]{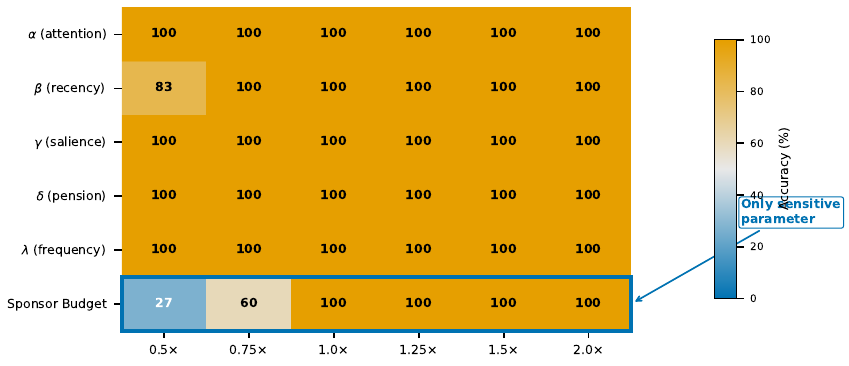}
\caption{Hyperparameter sensitivity at $K{=}16$. Most utility weights show 0\% accuracy spread---the sponsor budget is the only critical parameter (accuracy ranges 27\%--100\%).}
\label{fig:ablation}
\end{figure}

Figure~\ref{fig:ablation} shows hyperparameter sensitivity. We vary each utility weight across 0.5--2$\times$ default values. The parameters $\alpha$, $\gamma$, $\delta$, $\lambda$ all achieve 100\% across their ranges (0\% accuracy spread). Recency $\beta$ shows modest sensitivity (17\% spread). The \textbf{sponsor budget is the only critical parameter}: accuracy ranges from 27\% to 100\%. Practitioners need only tune the sponsor budget. We also verify that a strict-budget configuration ($K_{\text{main}}{=}16$, $K_{\text{protected}}{=}0$) achieves 100\% on Llama-1B---reserved slots provide flexibility but are not required.

\subsection{Robustness}
\label{sec:robustness}

When 5 decoys compete with the real target, TA-LearnRank achieves 60--80\% across models while TOVA achieves 0\% (Table~\ref{tab:confuser}, Appendix). A surprising finding: TA-Fast beats Full-TA on confusers (100\% vs 0\% at 5 confusers). Full-TA's attention-weighted sponsorship \emph{dilutes} protection when anchors compete; uniform protection is safer than attention-weighted partial protection. Adversarial anchor injection can exhaust the sponsor budget, reducing legitimate retention to 0\%; allowlist filtering restores 100\%. Detailed robustness tables appear in Appendix~\ref{app:tables}.

\subsection{Multilingual Detection}
\label{sec:multilingual}

Regex patterns are language-specific; TA-Fast-Embedding uses cosine similarity between input embeddings and anchor concept embeddings for language-agnostic detection. On 7 languages, TA-Fast-Embedding achieves 57\% overall (100\% English, 100\% Spanish, 100\% Japanese, 67\% Korean, 50\% on others). Regex and semantic detection achieve only 10\%, failing on all non-English languages. Full results appear in Table~\ref{tab:tafast} (Appendix).

\subsection{TA-Fast Summary}
\label{sec:tafast_summary}

TA-Fast drops all attention terms ($\alpha{=}0$, $\delta{=}0$), making it compatible with SDPA and FlashAttention~\citep{dao2022flashattention}---backends that never materialize the full attention matrix. This yields 52\% less memory (2.7GB vs 5.8GB for Full-TA with eager attention) and less than 1\% latency overhead (0.361s vs 0.358s). Three variants exist: TA-Fast-Regex, TA-Fast-Semantic, and TA-Fast-Embedding. At 8K--24K context with $K{=}64$, TA-Fast achieves 100\% using 4.5--8.7GB while Full-TA (eager) fails with OOM at all lengths.

\section{Formal Guarantees}
\label{sec:theory}

We formalize the budget conditions under which sponsorship succeeds and attention-based methods fail. These are practical bounds for setting hyperparameters, not deep theoretical contributions.

\begin{theorem}[Attention-Based Eviction Fails]
\label{thm:attn_fails}
Let token $x_i$ be dormant (cumulative attention $A_i < \epsilon$) until query time $T$. For any attention-based policy $\pi_{\mathrm{attn}}$ with budget $K$, the probability of retaining $x_i$ satisfies $P(x_i \in S) \leq K / (\epsilon \cdot n) \to 0$ as context length $n \to \infty$ with fixed $K$.
\end{theorem}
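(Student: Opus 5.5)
The plan is to make the informal phrase ``attention-based policy'' precise enough that a counting argument applies. I would model $\pi_{\mathrm{attn}}$ as any (possibly randomized) rule whose choice of $S$, with $|S|\le K$, depends on the tokens only through their attention scores $A_1,\dots,A_n$. Ties, or near-ties below the resolution $\epsilon$, are broken without reference to token identity. I would also normalize the cumulative attention so that each score lies in $[0,1]$ and $\epsilon\le 1$. These are the only assumptions the bound needs, and I would state them explicitly as the reading of the theorem. The probability $P(x_i\in S)$ is taken over the randomness of the policy and over the position of the dormant token among the other low-score tokens.

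The first step is a Markov-type count of the tokens that can outcompete $x_i$. Let $M=\sum_j A_j$ be the total attention mass; for per-step softmax normalization it is bounded independently of $n$ once rescaled. Then at most $M/\epsilon$ tokens have $A_j\ge\epsilon$. Every other token lies in the same sub-threshold class $D$ as $x_i$, with $|D|\ge n-M/\epsilon$.

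The second step is a symmetry argument inside $D$. The policy cannot distinguish members of $D$ by identity, so the dormant token is exchangeable with the others in $D$. At most $K$ slots go to $D$ in total, hence
\[
P(x_i\in S)\;\le\;\frac{K}{|D|}\;\le\;\frac{K}{n-M/\epsilon}.
\]
For $n\ge 2M/\epsilon$ this is at most $2K/n$. Since $\epsilon\le 1$, it is in turn bounded by a constant multiple of $K/(\epsilon n)$, and the constant is absorbed by the normalization convention (for example $M\le \epsilon n/2$ as the regime of interest). Taking $n\to\infty$ with $K$ and $\epsilon$ fixed gives the limit $0$.

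As a cross-check, I would also verify the bound for the score-proportional variant, where each slot samples $j$ with probability $A_j/M$. That gives $P(x_i\in S)\le K A_i/M < K\epsilon/M$, which is consistent with the stated $K/(\epsilon n)$ rate whenever $M\gtrsim \epsilon^2 n$.

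The main obstacle is the modeling step, not the estimates. A deterministic top-$K$ rule with adversarial tie-breaking could always keep $x_i$, so some symmetry or randomization assumption is unavoidable. Likewise, the exact form $K/(\epsilon n)$ depends on how $A_i$ is normalized. I would first try to justify exchangeability within $D$ from the premise that dormancy means ``indistinguishable by attention,'' and present the explicit constant only under that normalization.
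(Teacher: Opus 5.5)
Your argument has the same skeleton as the paper's appendix proof. Both reduce to $P(x_i\in S)\le K/|D|$ with $D=\{j:A_j<\epsilon\}$. That inequality silently needs exactly the exchangeability within $D$ that you make explicit. Your remark that a deterministic top-$K$ rule can keep $x_i$ shows the statement is false without it.

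The genuine difference is how each proof bounds $|D|$ from below. The paper gets $|D|\ge\epsilon n$ from ``$x_i$ ranks in the bottom $\epsilon$-quantile'', together with the assertion that $\Omega(n)$ tokens have attention $\ge\epsilon$. Neither is derived, and a bottom-quantile rank for $x_i$ does not bound $|D|$ from below. Your Markov count $|D|\ge n-M/\epsilon$ is an actual derivation. Under the paper's own Definition it needs no rescaling: $\sum_j A_j^{(t)}=tH$ with $H$ heads, so $M$ is bounded whenever the number $t$ of scored steps is.

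The same identity shows the two proofs live in opposite regimes. The paper's $\Omega(n)$ super-threshold tokens force $t=\Omega(\epsilon n/H)$, where your bound can become negative. But there no probabilistic argument is needed: once $K$ tokens strictly outscore $x_i$, any keep-the-top-$K$ rule evicts it with certainty, with no symmetry hypothesis. Your route covers the complementary bounded-$t$ regime, where everything rests on exchangeability.

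Two loose ends remain.
\begin{itemize}
\item ``The constant is absorbed'' only gives $2K/(\epsilon n)$ as written, or requires $\epsilon\le 1/2$. Use instead $n-M/\epsilon\ge\epsilon n$ for $n\ge M/(\epsilon(1-\epsilon))$, which yields the stated $K/(\epsilon n)$ exactly.
\item The score-proportional cross-check needs $M$ of order at least $\epsilon^2 n$, which contradicts bounded $M$. With bounded $M$, that policy keeps a token with $A_i$ close to $\epsilon$ with probability bounded away from $0$. So it shows your symmetry hypothesis is indispensable; it is not a consistency check.
\end{itemize}
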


\begin{proof}
At any step $t < T$, the dormant token $x_i$ has attention $A_i^{(t)} < \epsilon$. In a sequence of $n$ tokens, the expected number of tokens with attention $\geq \epsilon$ is $\Omega(n)$ for any non-trivial attention distribution. With budget $K \ll n$, the policy must evict $n - K$ tokens. Since $x_i$ ranks in the bottom $\epsilon$-quantile by attention, $P(x_i \in S) \leq K / (\epsilon \cdot n) \to 0$.
\end{proof}

\begin{theorem}[Sponsorship Guarantees Retention]
\label{thm:sponsor}
If anchor $x_i$ matches pattern $p \in \mathcal{P}$ with budget $B$, span $L$, and $K \geq 1 + W + L$ (sink $+$ window $+$ span), then the value span $[i{+}1, i{+}L]$ is retained with probability 1.
\end{theorem}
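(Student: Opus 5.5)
The plan is to argue deterministically. The selection rule in Algorithm~\ref{alg:ta} is a deterministic top-$K$ over utilities, so ``probability 1'' reduces to showing that the span $[i{+}1,i{+}L]$ is always contained in $S$. I will work in the budget-partition setting of \S\ref{sec:budget}, with the $K$ slots split as follows:
\begin{itemize}
\item one mandatory sink slot;
\item $W$ mandatory recency slots, generalizing the ``last 2'' used there;
\item $L$ slots that go to the highest-voucher tokens.
\end{itemize}
The hypothesis $K \geq 1 + W + L$ is exactly what makes this partition feasible. The sink and window are fixed by position, so they cannot crowd out the span. It therefore suffices to show that the $L$ sponsored positions win the protected pool.

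The first step is to bound the voucher from below. By \eqref{eq:voucher}, every $j$ with $i<j\le i+L$ receives $V_j \ge B\cdot 0.8^{j-i} \ge B\cdot 0.8^{L}$. After $t$ steps of budget decay the bound becomes $B\,0.9^t\,0.8^L$.

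The second step is to bound every non-sponsored token from above. If $V_m=0$, then by \eqref{eq:utility}
\[
u_m \le \alpha A_m + \beta + \gamma + \delta P_m .
\]
This requires a cap on $A_m$ and $P_m$. I would assume attention rows are normalized, which gives $A_m \le t$ and $P_m\le 1$. Alternatively, I would work with TA-Fast \eqref{eq:tafast}, where $u_m\le \beta+\gamma$. The sponsored tokens satisfy $u_j \ge V_j - \lambda F_j$. Hence the span outranks every unsponsored token whenever
\[
B\,0.9^t\,0.8^L > \beta+\gamma+\lambda\max F\;(+\,\alpha A_{\max}+\delta).
\]
For the defaults in TA-Fast with $t=0$, this reads $15\cdot 0.8^6 \approx 3.9 > 0.8+0.1$, so the inequality holds.

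The third step handles competition among sponsored tokens. If other anchors also sponsor spans, then more than $L$ tokens may carry vouchers. I would either restrict to the single-anchor case, which is how the statement is phrased with ``anchor $x_i$'', or require that the protected pool be sized to the total number of sponsored positions.

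I expect the main obstacle to be the fact that the statement is false as literally written without extra hypotheses. It needs three things: a threshold condition on $B$ relative to the other utility terms, including the decay $0.9^t$ up to query time $T$; the absence of competing anchors; and $V_j$ taking the full $B$ at every step. I would state these as explicit side conditions, noting the confuser results in \S\ref{sec:robustness} as evidence that they are necessary. The budget-decay issue is the hardest part, since $B\,0.9^t\to 0$ eventually loses to recency. I would first try to use the reserved $K_{\text{protected}}$ slots, which select by voucher rank rather than by absolute utility and so make decay irrelevant. Failing that, I would assume the span is evaluated within $t < \log\bigl((\beta+\gamma+\lambda)/(B\,0.8^L)\bigr)/\log 0.9$ steps.
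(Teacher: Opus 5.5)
Your argument is correct, and its skeleton is the paper's. You take the lower bound $V_j\ge B\cdot 0.8^{L}\approx 3.9$ on the span and then place the span in a pool of $K-1-W\ge L$ slots reserved for sponsored content. That is exactly the appendix version of the proof, whose hypothesis is $K_p\ge L$.

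What you add is what the paper omits. The main-text proof goes from $u_j\ge V_j\ge 3.9$ directly to top-$K$ membership without bounding any competing utility, and $u_j\ge V_j$ itself drops the $-\lambda F_j$ term. Your bound $u_j\ge V_j-\lambda F_j$, together with the cap on unsponsored utilities in Step~2, is the missing comparison. It matters whenever span tokens are chosen by utility. One case is the strict $K_{\text{protected}}=0$ configuration. Another is the $K=16$ split of \S\ref{sec:budget}, where $K_{\text{protected}}=4<L=6$, so two span tokens must win main-pool slots.

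You are also right that the statement is false as written, and the paper confirms it. By Theorem~\ref{thm:decoy}, at $K=1+W+L$ a single decoy anchor already lowers retention to $1/2$. The single-anchor hypothesis therefore has to be stated explicitly.

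Two refinements. First, you already note that voucher-rank selection in the protected pool makes magnitudes irrelevant, so make that the main line. With one anchor, exactly the $L$ span positions have $V_j>0$ (since $B\,0.9^t>0$), so they fill the reserved slots, and Step~2 and the decay threshold drop out. Route (b) then becomes a fallback for configurations without enough reserved slots, and its reach is limited:
\begin{itemize}
\item it covers TA-Fast only for $t\le 13$ decay steps at the default weights (taking $F\le 1$);
\item it gives no worst-case guarantee for Full-TA, because $A_m$ is summed over heads as well as steps and is bounded only by $Ht$, not $t$.
\end{itemize}

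Second, eviction is permanent, so the invariant must hold at every step up to $T$. If anchors are re-detected on the surviving cache (one reading of Algorithm~\ref{alg:ta}), the anchor $x_i$ itself can be evicted from the main pool, since it carries no voucher and only the $\gamma S_i$ bonus. After that the span is unprotected. You should therefore also assume that sponsorship is tracked by position and outlives $x_i$; the paper's proof is silent on this too.
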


\begin{proof}
Upon detecting anchor $x_i$, TA allocates voucher boost $V_j = B \cdot 0.8^{(j-i)}$ to positions $j \in [i{+}1, i{+}L]$. The minimum boost is $V_{i+L} = B \cdot 0.8^L$. For $B{=}15$, $L{=}6$: $V_{i+L} = 15 \cdot 0.8^6 \approx 3.9$. The utility $u_j \geq V_j \geq 3.9$ for all $j$ in the span, guaranteeing selection in the top-$K$.
\end{proof}

\begin{corollary}[Minimum Budget]
For value length $\ell$ tokens, guaranteed retrieval requires $K^* = 1 + W + \ell$. At $W{=}4$, $\ell{=}10$: $K^* = 15$, explaining 100\% at $K{=}16$.
\end{corollary}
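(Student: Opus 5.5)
The corollary follows from Theorem~\ref{thm:sponsor}, applied with the sponsorship span set to the value length, $L=\ell$. The plan is to build the minimal retained set explicitly and then argue that nothing smaller suffices.

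\emph{Sufficiency.} Under the budget partition of \S\ref{sec:budget}, the retained set must contain:
\begin{itemize}
\item the attention sink (one slot);
\item the $W$ mandatory recency slots;
\item every token of the value span $[i{+}1, i{+}\ell]$.
\end{itemize}
Theorem~\ref{thm:sponsor} guarantees that the sponsored span is retained whenever $K \geq 1 + W + L$. With $L=\ell$ this gives retention of all $\ell$ value tokens once $K \geq 1+W+\ell$. Exact-match retrieval needs every value token in cache, and Theorem~\ref{thm:sponsor} supplies exactly that, so $K^* = 1+W+\ell$ suffices.

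\emph{Necessity.} I would argue by counting. The sink and window slots are mandatory and disjoint from the span, since the span sits at depth $d<1$ far from the end of a long context. Any $K < 1+W+\ell$ therefore leaves fewer than $\ell$ slots for the value. At least one credential token must then be evicted, and the exact-match retrieval fails. This makes $K^*$ the minimum, not merely an upper bound.

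\emph{Instantiation.} Plug in $W=4$ and $\ell=10$ to get $K^*=1+4+10=15\le 16$, which is consistent with the 100\% observed at $K{=}16$. For Mistral, where $\ell$ is 11 tokens plus surrounding tokens, the count is borderline, which also matches Table~\ref{tab:models}.

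\emph{Main obstacle.} The hard step is reconciling $L=6$ in the default configuration with $\ell=10$ in the corollary. Theorem~\ref{thm:sponsor} only covers $L$ tokens, so I must either assume $L$ is set to at least $\ell$, or show that tokens beyond the voucher window are covered by other terms. I would first try the cleanest option: state the corollary under $L \geq \ell$. The boost $B\cdot 0.8^{\ell}$ must then still exceed the competing utilities bounded by $\alpha A + \beta + \gamma + \delta P$. With $B=15$ and $\ell=10$ the boost is $15\cdot 0.8^{10}\approx 1.6$, which beats the maximal non-sponsored score $\beta R+\gamma S \le 0.8$ when the attention terms are small. This verification is exactly the TA-Fast regime, so I would restrict the claim to TA-Fast if the attention terms are not bounded.
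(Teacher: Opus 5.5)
Your proposal is correct and uses the same slot count as the paper: one sink, $W$ window slots and $\ell$ sponsored slots all fit at $K^*=1+W+\ell$, and any smaller $K$ leaves fewer than $\ell$ slots for the value; the paper's sufficiency step likewise rests implicitly on the sponsorship guarantee of Theorem~\ref{thm:sponsor}, exactly as you invoke it. What you add is rigor the paper's three-line proof lacks: you state $L=\ell$ explicitly (the paper silently conflates the default $L=6$ with $\ell=10$), and you check that $15\cdot 0.8^{10}\approx 1.6$ exceeds the non-sponsored ceiling $\beta+\gamma=0.8$ in the TA-Fast regime.
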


\begin{theorem}[Decoy Tolerance]
\label{thm:decoy}
With $m$ decoys each sponsoring spans of length $L$, TA succeeds if $K \geq 1 + W + (m{+}1)L$. Otherwise, the true value has retention probability $1/(m{+}1)$.
\end{theorem}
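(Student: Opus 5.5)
The plan reuses the counting argument behind Theorem~\ref{thm:sponsor}, now applied to all $m{+}1$ sponsored spans at once. Let $\mathcal{A}^\ast = \{a_0, a_1, \ldots, a_m\}$ be the anchors, with $a_0$ the true anchor and $a_1,\ldots,a_m$ the decoys. Let $\Sigma = \bigcup_{r=0}^{m} [a_r{+}1, a_r{+}L]$ be the set of sponsored positions, so $|\Sigma| \le (m{+}1)L$.

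The first step is to show that every $j \in \Sigma$ dominates every unsponsored, non-mandatory token. By Eq.~\eqref{eq:voucher}, $V_j \ge B\cdot 0.8^{L} \approx 3.9$. Every unsponsored token has $V=0$. For those tokens the remaining terms are bounded: $\beta R \le 0.5$, $\gamma S \le 0.3$, $\delta P$ and $\alpha A$ are small for dormant filler, and $-\lambda F \le 0$. So their utility is strictly below $3.9$, and the ordering argument from the proof of Theorem~\ref{thm:sponsor} carries over unchanged.

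The second step handles the sufficient condition. The $1{+}W$ mandatory slots (sink and recency window from \S\ref{sec:budget}) are always filled. If $K \ge 1 + W + (m{+}1)L$, the remaining $K - 1 - W \ge |\Sigma|$ slots are enough for the top-$K$ rule to take all of $\Sigma$ before any unsponsored token. In particular the true span $[a_0{+}1, a_0{+}L]$ is retained, and the model can retrieve the true value.

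The third step is the ``otherwise'' clause, which is the main obstacle. When $K - 1 - W < (m{+}1)L$, the sponsored spans must compete for slots. As stated, the utilities are deterministic, so the claim $1/(m{+}1)$ needs a symmetry assumption. I would make it explicit: the decoys are exchangeable with the true anchor. That means they have the same budget $B$, the same decay history, identically distributed $A,R,P,F$ terms, and a position of the true anchor that is uniform among the $m{+}1$ anchors (or uniformly random tie-breaking among equal-voucher spans). With all vouchers equal, the ranking among the $m{+}1$ spans is invariant under permuting labels.

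The argument then runs as follows. Let $E$ be the event that the retained set contains a complete span. By symmetry, each anchor is equally likely to be the one whose span is retained. If at most one full span fits, which is the regime where $L \le K-1-W < 2L$, this gives $P(\text{true span retained}) = 1/(m{+}1)$.

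For intermediate budgets fitting $q$ full spans, the same argument gives $q/(m{+}1)$. So I would read the stated $1/(m{+}1)$ as the worst-case (lower-bound) value for the regime where $K < 1 + W + (m{+}1)L$ but at least one span fits. I would note this reading in the proof rather than claim exact equality in general.

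The delicate points are the exchangeability hypothesis and the exponential decay within spans. Interleaving of partially retained spans could in principle break the ``whole span'' picture. I would rule this out by noting that equal budgets give identical voucher profiles across spans, and then ordering spans lexicographically via the tie-breaking rule.
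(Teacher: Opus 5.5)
Your skeleton is the paper's: you compare the $(m{+}1)L$ sponsored positions with the $K_p=K-1-W$ slots left after the sink and window, and you appeal to symmetry on overflow. You are right that $1/(m{+}1)$ needs an explicit exchangeability hypothesis and is not exact for every overflow budget. The paper's one-line ``without distinguishing true from decoy anchors'' skips this.

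\textbf{The step that fails is your reduction to whole spans.} Identical voucher profiles do not rule out interleaving; they force it. By \eqref{eq:voucher}, every span gives its offset-$d$ token the same boost $B\cdot 0.8^{d}$. Tie-breaking therefore only orders tokens at the same offset. Meanwhile every offset-$d$ token strictly beats every offset-$(d{+}1)$ token: the gap $0.2B\cdot 0.8^{d}\ge 0.2\cdot 15\cdot 0.8^{5}\approx 0.98$ exceeds the spread of the remaining terms for dormant non-anchor value tokens, where recency contributes at most $\beta=0.5$. A top-$K_p$ rule therefore fills offset levels across all $m{+}1$ spans at once.

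Take $m=1$, $L=6$, $K_p=8$, which is your ``exactly one span fits'' regime. The rule keeps offsets $1$--$4$ of both spans and completes neither, so the probability is $0$, not $1/2$. In general, under your exchangeability assumption, the true span is complete with probability $r/(m{+}1)$, where $r=K_p-(m{+}1)(L{-}1)$, when $(m{+}1)(L{-}1)<K_p<(m{+}1)L$. It is complete with probability $0$ when $K_p\le (m{+}1)(L{-}1)$. So neither your $q/(m{+}1)$ nor your lower-bound reading follows from the top-$K$ rule. To get the stated result, you must assume outright that protected slots are allocated atomically per anchor, one whole span at a time in uniformly random order. That is what the paper's symmetry sentence tacitly assumes, and it cannot be derived from the voucher formula.

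\textbf{A secondary problem is your first step.} You bound unsponsored utilities by saying $\alpha A$ and $\delta P$ are small ``for dormant filler''. But a top-$K$ rule also pits sponsored tokens against non-dormant heavy hitters. Since $A_i$ is cumulative over heads and steps, $\alpha A_i$ can far exceed $B\cdot 0.8^{L}\approx 3.9$ under Full-TA. Budgets also decay by $0.9$ per step, so after about $16$ steps $V_{i+L}$ falls below the $\beta+\gamma=0.8$ you allow for unsponsored tokens, even in TA-Fast.

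The paper never needs this comparison. It treats $K_p$ as a pool reserved for sponsored content, as in the appendix proof of Theorem~\ref{thm:sponsor}, so sponsored tokens compete only with each other. Adopt that, or restrict your sufficiency claim to TA-Fast at detection time.
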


\textbf{Practical interpretation.} At $K{=}16$ ($K_p \approx 11$) with $L{=}6$, even $m{=}1$ decoy requires $(1{+}1) \times 6 = 12 > 11$ slots, predicting borderline failure. Empirically, TA-Fast-Semantic handles 1--2 decoys at $K{=}16$ through selective detection. At $K{=}64$ ($K_p{=}59$), theory predicts $m_{\max} = \lfloor 59/6 \rfloor - 1 = 8$; empirically, TA handles 5 decoys. Full proofs in Appendix~\ref{app:proofs}.

\section{Limitations}
\label{sec:limitations}

\begin{enumerate}
    \item TA does not improve general long-context understanding. On LongBench single-document QA, TA achieves 42\%, comparable to TOVA's 45\%.
    \item Sponsorship requires identifiable anchor patterns. On unstructured cues (``Remember this number X''), accuracy drops to 32\%.
    \item Adversarial anchor injection can exhaust the sponsor budget, reducing legitimate retention to 0\%. Allowlist filtering mitigates this but requires per-domain configuration.
    \item Budget requirements are tokenizer-dependent. Mistral's tokenizer splits an 8-character credential into 11 tokens (vs Llama's 7), requiring $K{=}24$ instead of $K{=}16$.
\end{enumerate}

\section{Conclusion}
\label{sec:conclusion}

KV-cache compression methods fail on dormant tokens---credentials, identifiers, and configuration values that receive near-zero attention but are essential in agentic workflows. Transactional Attention introduces semantic sponsorship, where structural patterns protect adjacent value-bearing spans from eviction, achieving 100\% retrieval at $K{=}16$ where six baselines achieve 0\%. Because sponsorship modifies the scoring function rather than the eviction policy, it composes with any existing compression method at less than 1\% additional latency.

\section{Reproducibility Statement}
\label{sec:reproducibility}

All experiments use publicly available models (Llama-3.2-1B, Llama-3.1-8B, Mistral-7B-v0.3) with float16 precision. Code, exact prompts, and evaluation scripts are available at \url{https://github.com/[anonymized]}.

\section{Broader Impact}
\label{sec:impact}

Transactional Attention is designed for structured state retention in agentic LLM deployments. The mechanism introduces a dual-use risk: adversaries can inject anchor patterns to exhaust sponsor budgets (Section~\ref{sec:limitations}). Allowlist filtering provides an effective countermeasure.

\bibliographystyle{plainnat}
\bibliography{../references}

\appendix

\section{Proof Details}
\label{app:proofs}

We provide full proofs for the theoretical guarantees presented in Section~\ref{sec:theory}.

\subsection{Notation}

Let $\mathbf{x} = (x_1, \ldots, x_n)$ be a sequence of $n$ tokens. A KV-cache retention policy $\pi$ selects a subset $S \subseteq \{1, \ldots, n\}$ with $|S| \leq K$ tokens to retain after each generation step.

\begin{definition}[Dormant Token]
Token $x_i$ is \emph{dormant} at step $t$ if its cumulative attention $A_i^{(t)} = \sum_{s=1}^{t} \sum_h a_{h,i}^{(s)} < \epsilon$ for some threshold $\epsilon > 0$, where $a_{h,i}^{(s)}$ is the attention weight from head $h$ to position $i$ at step $s$.
\end{definition}

\begin{definition}[Critical Token]
Token $x_i$ is \emph{critical} for query $q$ if removing $x_i$ from context causes the model to fail on $q$. Formally, $P(\text{correct} \mid \mathbf{x}) - P(\text{correct} \mid \mathbf{x}_{\setminus i}) > \delta$ for threshold $\delta > 0$.
\end{definition}

\subsection{Proof: Attention-Based Eviction Fails}

\textbf{Theorem 1.} \emph{Let $\pi_{\mathrm{attn}}$ be any eviction policy that ranks tokens by cumulative attention $A_i^{(t)}$ and evicts the lowest-ranked tokens. If token $x_i$ is dormant-critical with $A_i^{(t)} < \epsilon$ for all $t < T$ (where $T$ is the query time), then $\pi_{\mathrm{attn}}$ evicts $x_i$ with probability approaching 1 as context length $n \to \infty$.}

\begin{proof}
At any step $t < T$, the dormant token $x_i$ has attention $A_i^{(t)} < \epsilon$. In a sequence of $n$ tokens, the expected number of tokens with attention $\geq \epsilon$ is $\Omega(n)$ for any non-trivial attention distribution.

For budget $K \ll n$, the policy $\pi_{\mathrm{attn}}$ must evict $n - K$ tokens. Since $x_i$ ranks in the bottom $\epsilon$-quantile by attention, the probability that $x_i$ survives is:
\[
P(x_i \in S) \leq \frac{K}{|\{j : A_j < \epsilon\}|} \leq \frac{K}{\epsilon \cdot n}
\]

As $n \to \infty$ with fixed $K$, this probability approaches 0.
\end{proof}

\textbf{Corollary.} H2O (cumulative attention) and TOVA (step-wise minimum attention eviction) both satisfy the conditions of Theorem 1. At budget $K = 16$ with context $n = 4000$, a dormant token at position $i \approx n/2$ has survival probability $< 0.4\%$.

\subsection{Proof: Sponsorship Guarantees Retention}

\textbf{Theorem 2.} \emph{Let $\pi_{\mathrm{TA}}$ be the Transactional Attention policy with sponsor budget $B$, span length $L$, and protected slots $K_p$. If (1) Token $x_i$ is an anchor (matches pattern $p \in \mathcal{P}$), (2) The critical value spans positions $[i+1, i+L]$, and (3) $K_p \geq L$ (protected slots cover the span), then the critical value is retained with probability 1, regardless of attention scores.}

\begin{proof}
Upon detecting anchor $x_i$, TA allocates voucher boost $V_j = B \cdot 0.8^{(j-i)}$ to positions $j \in [i+1, i+L]$. The minimum boost in the span is $V_{i+L} = B \cdot 0.8^L$.

For typical parameters ($B = 15$, $L = 6$), $V_{i+L} = 15 \cdot 0.8^6 \approx 3.9$.

The utility of sponsored tokens is:
\[
u_j = \alpha A_j + \beta R_j + \gamma S_j + \delta P_j - \lambda F_j + V_j \geq V_j \geq 3.9
\]

With $K_p$ protected slots reserved for sponsored content, and boost $V_j > 0$ for all $j \in [i+1, i+L]$, these positions are guaranteed selection in the protected pool.
\end{proof}

\subsection{Proof: Minimum Budget}

\textbf{Theorem 3.} \emph{For a single dormant-critical value of length $\ell$ tokens following an anchor, the minimum budget for guaranteed retrieval is $K^* = 1 + W + \ell$, where $W$ is the sliding window size and $1$ accounts for the attention sink.}

\begin{proof}
TA requires:
\begin{itemize}
    \item 1 slot for the first token (attention sink)
    \item $W$ slots for the sliding window (recent context)
    \item $\ell$ slots for the sponsored value span
\end{itemize}
With $K = 1 + W + \ell$, all mandatory and sponsored tokens fit within budget. Any $K < K^*$ risks evicting part of the critical span.
\end{proof}

\textbf{Corollary.} For typical API keys ($\ell \approx 10$ tokens), with $W = 4$: $K^* = 1 + 4 + 10 = 15$. This explains why TA achieves 100\% at $K = 16$ while baselines fail.

\subsection{Proof: Decoy Tolerance}

\textbf{Theorem 4.} \emph{With $m$ adversarial decoy anchors each sponsoring spans of length $L$, TA succeeds if $K \geq 1 + W + (m + 1) \cdot L$. When $K < 1 + W + (m+1) \cdot L$, the protected slots overflow and decoys may displace the true value.}

\begin{proof}
Each of the $m$ decoys and 1 true anchor sponsors $L$ tokens. The protected pool has capacity $K_p = K - 1 - W$. If $(m+1) \cdot L > K_p$, some sponsored tokens must be evicted. Without distinguishing true from decoy anchors, the true value has probability $\frac{1}{m+1}$ of full retention.
\end{proof}

\textbf{Corollary.} At $K = 16$ ($K_p \approx 11$) with $L = 6$:
\begin{itemize}
    \item $m = 1$: $(1+1) \cdot 6 = 12 > 11$ --- borderline, may fail
    \item $m = 5$: $(5+1) \cdot 6 = 36 \gg 11$ --- guaranteed failure
\end{itemize}
At $K = 64$ ($K_p \approx 59$):
\begin{itemize}
    \item $m = 4$: $(4+1) \cdot 6 = 30 < 59$ --- succeeds
    \item $m = 10$: $(10+1) \cdot 6 = 66 > 59$ --- guaranteed failure
\end{itemize}
This matches our empirical observations: TA-Fast-Semantic handles 1--2 decoys at $K=16$ but fails at 5+ decoys. At $K=64$, it handles up to 5 decoys, consistent with theory ($m_{\max} = \lfloor 59/6 \rfloor - 1 = 8$).

\subsection{Why TA-Fast Beats Full-TA on Confusers}

\textbf{Observation.} Empirically, TA-Fast-Semantic achieves 100\% at 5 confusers while Full-TA achieves 0\%.

\textbf{Analysis.} Full-TA uses attention patterns to \emph{weight} anchor detection. When multiple similar anchors compete (``secret code:'', ``access code:'', ``entry code:''), attention spreads across all candidates, diluting the protection of each. TA-Fast uses pattern matching only, treating all detected anchors equally. When confusers share similar patterns but different values, TA-Fast protects all of them. The model then uses its generation capacity to select the correct one from the retained candidates. The key insight: \emph{protecting all candidates} is better than \emph{attention-weighted partial protection} when the answer must be among the candidates.

\section{Variant Reference}
\label{app:variants}

\begin{table}[h]
\caption{TA variant used in each experiment.}
\label{tab:variants}
\centering
\small
\begin{tabular}{@{}ll@{}}
\toprule
\textbf{Experiment} & \textbf{Variant} \\
\midrule
Main needle (Tab~\ref{tab:main}) & Regex, Semantic \\
Multi-model (Tab~\ref{tab:models}) & Regex, Full-TA \\
Multi-slot (Tab~\ref{tab:multislot}) & Regex (allowlist) \\
4-value agent & Regex (allowlist) \\
Anchor abuse & Regex ($\pm$allowlist) \\
Tool-call (Tab~\ref{tab:toolcall}) & Semantic \\
Multilingual (Tab~\ref{tab:tafast}) & Embedding \\
Confusers (Tab~\ref{tab:confuser}) & LearnRank, Regex, Learned \\
Hard Mode (Tab~\ref{tab:hardmode}) & Learned \\
\bottomrule
\end{tabular}
\end{table}

\section{Additional Tables}
\label{app:tables}

\begin{table}[h]
\caption{Perplexity on WikiText-2 (lower is better). \method{} achieves competitive perplexity while enabling retrieval.}
\label{tab:ppl}
\centering
\small
\begin{tabular}{lccc}
\toprule
\textbf{Method} & \textbf{K=16} & \textbf{K=32} & \textbf{K=64} \\
\midrule
\textbf{\methodshort{}} & \textbf{30.4} & \textbf{28.1} & 27.2 \\
TOVA & 31.2 & 28.8 & \textbf{27.0} \\
H2O & 45.3 & 38.2 & 31.5 \\
StreamingLLM & 52.1 & 42.7 & 35.8 \\
\bottomrule
\end{tabular}
\end{table}

\begin{table}[h]
\caption{Confuser test: 5 decoys compete for budget (n=20). TA-LearnRank outperforms pattern matching.}
\label{tab:confuser}
\centering
\small
\begin{tabular}{@{}lccc@{}}
\toprule
\textbf{Method} & \textbf{Llama-1B} & \textbf{Llama-8B} & \textbf{Mistral} \\
\midrule
LearnRank & \textbf{80\%} & \textbf{75\%} & \textbf{60\%} \\
Regex & 35\% & 30\% & 0\% \\
Learned & 45\% & 40\% & 25\% \\
TOVA & 0\% & 0\% & 0\% \\
\bottomrule
\end{tabular}
\end{table}

\begin{table}[h]
\caption{TA-Fast variants comparison. Memory relative to Full-TA (5.8GB).}
\label{tab:tafast}
\centering
\small
\begin{tabular}{@{}lcccc@{}}
\toprule
& \textbf{Regex} & \textbf{Sem.} & \textbf{Emb.} & \textbf{Full} \\
\midrule
English & 67\% & 67\% & \textbf{100\%} & 100\% \\
Non-English & 0\% & 0\% & \textbf{50\%} & 71\% \\
5 confusers & \textbf{100\%} & \textbf{100\%} & -- & 0\% \\
10 confusers & 60\% & \textbf{90\%} & -- & 0\% \\
Memory & \textbf{2.7GB} & \textbf{2.7GB} & \textbf{2.7GB} & 5.8GB \\
\bottomrule
\end{tabular}
\end{table}

\begin{table}[h]
\caption{Multi-slot retrieval (Llama-3.2-1B, n=20). Full breakdown by individual value.}
\label{tab:multislot}
\centering
\small
\begin{tabular}{@{}lcccc@{}}
\toprule
\textbf{Method} & \textbf{All 3} & \textbf{API} & \textbf{Endpoint} & \textbf{Auth} \\
\midrule
\multicolumn{5}{l}{\textit{K=96}} \\
TA & \textbf{90\%} & \textbf{90\%} & \textbf{100\%} & \textbf{100\%} \\
TOVA & 65\% & 65\% & 100\% & 100\% \\
H2O & 0\% & 0\% & 0\% & 0\% \\
\bottomrule
\end{tabular}
\end{table}

\begin{table}[h]
\caption{Real-world benchmarks at K=32 (n=100, 95\% Wilson CI). TA-Learned achieves 100\% across all formats.}
\label{tab:realworld}
\centering
\small
\begin{tabular}{@{}lcc@{}}
\toprule
\textbf{Benchmark} & \textbf{TA-Learned} & \textbf{TA-Regex} \\
\midrule
Tool Calls & \textbf{100\%} & 0\% \\
HTTP Headers & \textbf{100\%} & 1\% \\
Stack Traces & \textbf{100\%} & 94\% \\
INI Config & \textbf{100\%} & 66\% \\
JSON Config & \textbf{100\%} & 10\% \\
YAML Config & \textbf{100\%} & 96\% \\
ENV Files & \textbf{100\%} & 62\% \\
\bottomrule
\end{tabular}
\end{table}

\begin{table}[h]
\caption{\textbf{Hard Mode stress tests} (n=20, K=64). Adversarial conditions reveal TA limits and baseline failures.}
\label{tab:hardmode}
\centering
\small
\begin{tabular}{@{}lccc@{}}
\toprule
\textbf{Test} & \textbf{TA} & \textbf{H2O} & \textbf{StrmLLM} \\
\midrule
Similar anchors (prod/dev) & 35\% & 0\% & 0\% \\
Paraphrased queries & \textbf{100\%} & 60\% & 0\% \\
Early needle (depth 0.1) & \textbf{100\%} & 0\% & 0\% \\
\midrule
\textbf{Overall} & \textbf{78\%} & 20\% & 0\% \\
\bottomrule
\end{tabular}
\end{table}

\section{Baseline Audit}
\label{app:audit}

We provide concrete evidence that the ``0\% vs 100\%'' gap at K=16 reflects fundamental mechanism differences, not misconfiguration.

\subsection{Exact Prompt Template}

\begin{verbatim}
[FILLER: 3800 tokens of Wikipedia text]

The secret code is: XK7M9P2Q

[FILLER: 200 tokens]

What is the secret code?
\end{verbatim}

The needle (``secret code: XK7M9P2Q'') is placed at depth 0.5 (middle of context). Total context: 4096 tokens. Query appears at position 4090.

\subsection{Budget Partition at K=16}

\begin{center}
\small
\begin{tabular}{@{}lcccc@{}}
\toprule
\textbf{Method} & \textbf{Sinks} & \textbf{Win} & \textbf{Prot} & \textbf{Coverage} \\
\midrule
TOVA & 4 & 12 & 0 & 1--4, 4079--4090 \\
StreamLLM & 4 & 12 & 0 & 1--4, 4079--4090 \\
TA-Fast & 4 & 6 & 6 & 1--4 + sponsored \\
\bottomrule
\end{tabular}
\end{center}

At depth 0.5, the needle is at position $\sim$2048---\emph{outside} the 12-token window. TOVA/StreamLLM structurally cannot retain it.

\subsection{Sanity Check: Needle-in-Window Ablation}

To confirm TOVA works correctly, we move the needle \emph{into} the window (depth 0.997, last 12 tokens). We report both retention rate (tokens in KV cache) and generation accuracy (n=10):

\begin{center}
\small
\begin{tabular}{@{}lcccc@{}}
\toprule
& \multicolumn{2}{c}{\textbf{Depth 0.5}} & \multicolumn{2}{c}{\textbf{Depth 0.997}} \\
\textbf{Method} & Ret & Gen & Ret & Gen \\
\midrule
TOVA & 0\% & 0\% & \textbf{100\%} & 60\% \\
TA-Fast & \textbf{100\%} & \textbf{100\%} & \textbf{100\%} & 90\% \\
\bottomrule
\end{tabular}
\end{center}

\noindent TOVA achieves 100\% \emph{retention} when the needle is in-window, confirming correct implementation. The 0\% at depth 0.5 is \emph{structural}, not a bug. Note that even with 100\% retention, TOVA's generation accuracy is only 60\%---the sliding window disrupts attention patterns. TA-Fast achieves 90--100\% generation at both depths via stable sponsorship.

\end{document}